\newcommand{\eref}[1]{(\ref{#1})}
\newcommand{\secref}[1]{Section~\ref{#1}}
\newcommand{\tabref}[1]{Table~\ref{#1}}
\newcommand{\figref}[1]{Fig.~\ref{#1}}
\newtheorem{theorem}{Theorem}
\newcommand*{\Cdot}{\raisebox{-0.25ex}{\scalebox{1.75}{$\cdot$}}}
\newcommand{\myparagraph}[1]{\vspace{0.05in}\noindent\textbf{#1}}
\newcommand{\sidenote}[1]{\ifthenelse{\boolean{draft-mode}}{\marginpar{\tiny\raggedright\textsf{\hspace{0pt}#1}}}{}}
\DeclareRobustCommand{\arnote}[1]{\ifthenelse{\boolean{draft-mode}}{\textcolor{blue}{\textbf{AR: #1}}}{}}
\DeclareRobustCommand{\ncdnote}[1]{\ifthenelse{\boolean{draft-mode}}{\textcolor{magenta}{\textbf{NCD: #1}}}{}}
\DeclareRobustCommand{\rhnote}[1]{\ifthenelse{\boolean{draft-mode}}{\textcolor{red}{\textbf{RH: #1}}}{}}
\begin{document}

% paper title
\title{In-Hand Manipulation via Motion Cones}

% You will get a Paper-ID when submitting a pdf file to the conference system

% \author{Author Names Omitted for Anonymous Review. Paper-ID 193}
\author{{Nikhil Chavan-Dafle, Rachel Holladay, and Alberto Rodriguez}\\
{Massachusetts Institute of Technology}\\
\{nikhilcd, rhollada, albertor\}@mit.edu}

% \author{\authorblockN{Nikhil Chavan-Dafle, Rachel Holladay, and Alberto Rodriguez}
%   \authorblockA{Department of Mechanical Engineering ---
%     Massachusetts Institute of Technology\\
%     {\tt\small nikhilcd@mit.edu}, {\tt\small albertor@mit.edu}}
%     \thanks{}}

% \author{\authorblockN{Anonymous}
% \authorblockA{ABC}}

\maketitle

\begin{abstract}
In this paper, we present the mechanics and algorithms to compute the set of feasible motions of an object pushed in a plane. 
This set is known as the \emph{motion cone} and was previously described for non-prehensile manipulation tasks in the horizontal plane.
We generalize its geometric construction to a broader set of planar tasks, where external forces such as gravity influence the dynamics of pushing, and prehensile tasks, where there are complex interactions between the gripper, object, and pusher.
We show that the motion cone is defined by a set of low-curvature surfaces and provide a polyhedral cone approximation to it.
We verify its validity with 2000 pushing experiments recorded with motion tracking system.

Motion cones abstract the algebra involved in simulating frictional pushing by providing bounds on the set of feasible motions and by characterizing which pushes will stick or slip.
We demonstrate their use for the dynamic propagation step in a sampling-based planning algorithm for in-hand manipulation. %
The planner generates trajectories that involve sequences of continuous pushes with 5-1000x speed improvements to equivalent algorithms.
\href{https://youtu.be/tVDO8QMuYhc}{\textcolor{RoyalBlue}{Video Summary -- youtu.be/tVDO8QMuYhc}}

%In this paper we study the mechanics of planar pushing tasks. We characterize the subspace of object motions that can be produced with pushes for which the pusher sticks to the object. 
%
%Such subspace for the case of pushing on a horizontal surface, where gravity does not affect the dynamics, is defined as \textit{motion cone} \citep{mason86}. 
%
%The  motion cone abstracts away the dynamics of pushing and provides a direct bound on the object motions that can be done while maintaining the pusher contact sticking to the object. 
%
%In this paper, we show that the motion cone for pushing tasks in a more general planar setting, where gravity affects the dynamics, is a cone defined by a set of low-curvature surfaces. 
%
%We propose a convex polyhedral approximation to the motion cone and demonstrate it's application in the context of sampling-based planning for in-hand manipulations. 
%
%With motion cones guiding the unit-step propagation of the planning tree, our planner explores the configuration space of different grasps extremely fast and generates pushing strategies to move the object to the desired pose in the grasp in less than a second.
%With different manipulations with variety of objects, we show that the motion cone based in-hand manipulation planner generates pushing strategies in less than a second which is three to tens of thousands times faster that some of the previous approaches for planning in-hand manipulations with pushing.

\end{abstract}

\IEEEpeerreviewmaketitle

\section{Introduction}
\label{sec:intro}

A \emph{motion cone} is the set of feasible motions that a rigid body can follow under the action of a frictional push.
We can think of it as a geometric representation of the underactuation inherent to frictional contacts.
%partial controllability
%
Since contacts can only push, and since friction is limited, a contact can move an object only along a limited set of rays.
The concept was introduced by~\citet{mason86} for point contacts in the context of a planar horizontal pushing task.

Motion cones abstract the algebra involved in simulating frictional contact dynamics.
A contact force on the inside (or boundary) of the friction cone produces sticking (or slipping) behavior, and leads to motion rays on the inside (or boundary) of the motion cone. 
\citet{lynch96} generalized the construction of motion cones to line contacts in a horizontal plane. They used them to plan stable pushing trajectories without the complexities of standard complementarity formulations of contact dynamics~\citep{Stewart96, Posa2014, ChavanDafle2017}.
Motion cones have since been the basis of several efficient planning and control strategies for planar manipulations on a horizontal support surface~\citep{Erdmann98, Dogar10, Hogan16, Jiaji17b, Hogan17}.
%Dogar2011, lynch92, lynch96,

This paper studies the construction of motion cones in a more general set of planar tasks. In particular, we highlight the case of prehensile manipulation in the vertical plane. In general planar tasks, external forces other than the pusher force (e.g., gravity) can alter the dynamics of contact interactions between the pusher, object, and gripper/support-plane. Motion cones efficiently capture the intricate mechanics of these tasks for simulation, planning, and control.
%\rhnote{The beginning doesnt connect to the previous sentence?}

We present three main contributions:
\begin{itemize}
    \item \textbf{Mechanics} of motion cones for planar tasks in the gravity plane. We show that the motion cone is defined by a set of low-curvature surfaces, intersecting at a point and pairwise in lines. We propose a polyhedral approximation to the motion cone for efficient computation.
    \item \textbf{Experimental validation} of the stick/slip condition of motion cones in a prehensile pushing task instrumented with a Vicon motion tracker.
    \item \textbf{Application} of motion cones in a sampling-based planning framework for in-hand manipulation using prehensile pushes (see \figref{fig:cone_planning}). We show this yields significant speed improvements with respect to  our prior work~\citep{ChavanDafle2017, ChavanDafle2018a}. 
\end{itemize}

\begin{figure}
\centering
\includegraphics[scale=0.9]{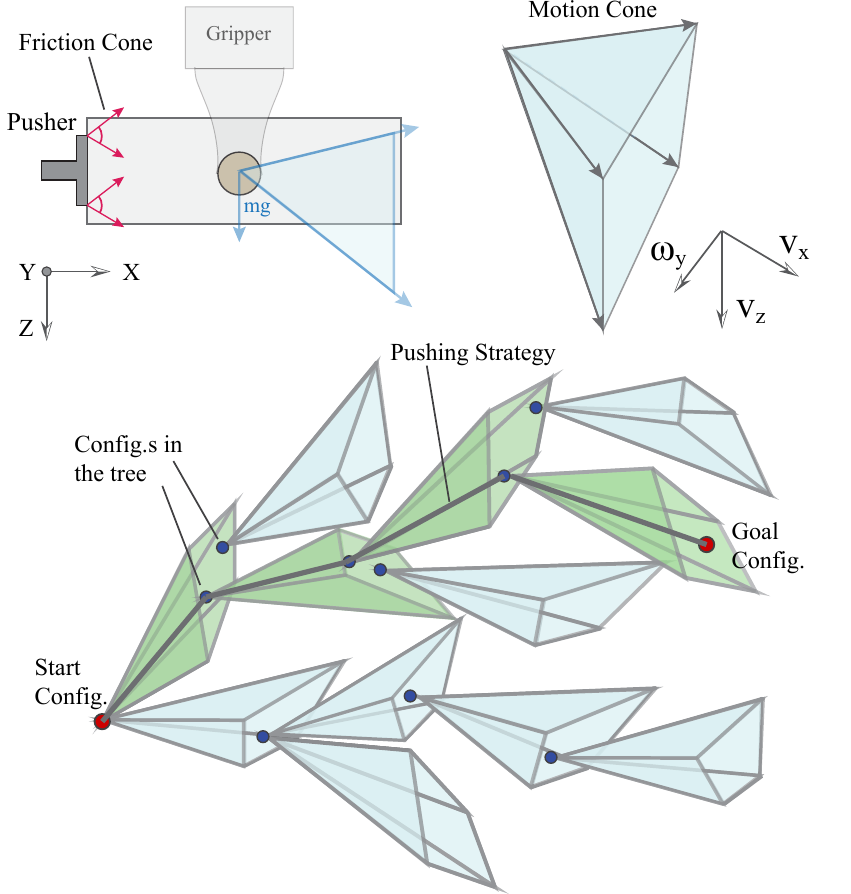}
\caption{(top) Example \emph{friction cone} and \emph{motion cone} of an object moving in the vertical plane. The pusher can move the object along any direction $[V_x, V_z, \omega_y]$ inside the motion cone. 
%The yellow cone represents the slice of a motion cone we would get in the absence of gravity.
(bottom) A plan via motion cones. Motion cones capture local reachability. A path in the tree of motion cones generates a pushing strategy to move an object.}
\label{fig:cone_planning}
\end{figure} 

\begin{figure*}[t]
\centering
\includegraphics[scale=0.205]{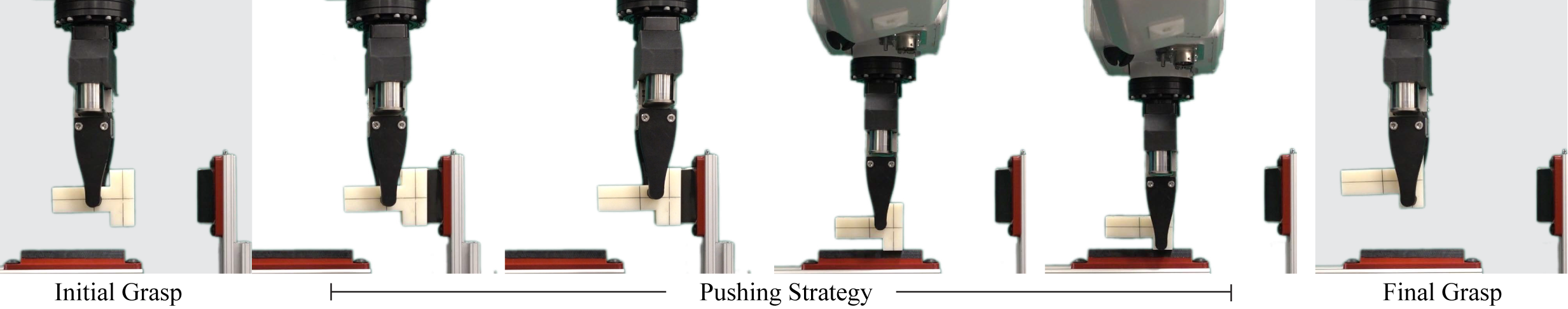}
\caption{Manipulating a T-shaped object in a parallel-jaw grasp by pushing it against features in the environment. The manipulation is shown from a side view.}
\label{fig:tpush_exp}
\vspace{-2mm}
\end{figure*}

%
%\figref{fig:cone_planning} shows a sketch demonstrating the use of motion cones in the vertical plane and the possible use of motion cones for in-hand manipulation planning. 
%
\figref{fig:tpush_exp} shows an example of a  pushing trajectory planned to change the grasp on a T-shaped object. The resulting trajectory is a sequence of continuous stable pushes, each where the object sticks to a particular external pusher. The proposed planning algorithm obtains these trajectories consistently in less than a second.

Section~\ref{sec:pushing_mechanics} reviews the mechanics of pushing in an arbitrary plane, while \secref{sec:cone_mechanics} describes the process to construct the corresponding motion cones. In \secref{sec:motioncone_computation} we discuss the computational aspects in the calculation and approximation of motion cones for pushing an object gripped with a finite force. In \secref{sec:planning} we demonstrate a sampling-based planning approach that exploits motion cones to speed up the computation of local contact dynamics for prehensile pushing.

The generalization of motion cones to interactions with gravity opens a door for efficient and robust planning of in-hand manipulations that respect and exploit the basic principles of frictional rigid-body contact interactions:
Newton's second law, Coulomb's friction law, the principle of Maximal Dissipation, and the rigidity of rigid-bodies.

\section{Related Work}
\label{sec:related}

Planning and control through contact is a central topic in robotic manipulation research. 
Rigid-body contact is modeled as a series of constraints on the possible motions and forces at contact. Simplifying and exploiting these constraints is a pivotal theme in the non-prehensile manipulation literature.
\citet{GoyalPhD89} introduced the concept of a \textit{limit surface}, a compact mapping from the friction wrench between slider and ground and the sliding twist at contact. \citet{mason86} studied the mechanics of pushing and proposed the concept of the \textit{motion cone}. These two fundamental geometric constructions provide direct force-motion mappings for contact interactions and have facilitated efficient planning and control techniques in non-prehensile manipulation~\citep{lynch92, lynch96, Dogar10, Dogar2011, Hogan16, Jiaji17b}.

Recent work on trajectory optimization and manipulation planning shows that it is possible to reason about different contact modes and plan trajectories through contact in a standard rigid body dynamics framework based on complementarity constraints. These methods often have to compromise on the computational efficiency~\citep{Posa2014,ChavanDafle2017} or the realism of contact dynamics~\citep{Kumar14,todorov2012,kolbert16}.

In contrast, some more recent work has focused on particular types of manipulation primitives and exploits assumptions in the problem formulation to develop fast planning and robust control strategies.
\citet{lynch15} demonstrates dynamic in-hand manipulation planning in a parallel-jaw grasp. With a pre-defined contact mode sequence at the fingers and a limit surface approximation for the force-motion interaction at the fingers, they derive a control law that can move the object to the goal pose in the grasp.
Similar approaches are explored for planning and controlling in-hand manipulations by actively using gravity~\citep{kragic_pivoting} or dynamic motions~\cite{holladay15,sintov16,Yifan17}. 

\citet{Sundaralingam17} propose a trajectory-optimization based, purely-kinematic approach for in-hand manipulation with a multi-finger gripper. They assume that the fingers on the object do not slip and impose soft constraints that in practice minimize the slip at the fingers. By assuming all finger contacts to be sticking, they bypass the need for modelling the dynamics of the manipulations and obtain kinematic plans quickly. %with very fast planning speed.

In our recent work~\citep{ChavanDafle2018a}, we present a fast sampling-based planning framework for in-hand manipulations with prehensile pushes, where the pusher contact is guaranteed to stick to the object using the mechanics of the manipulations. 
The plans are discrete sequences of continuous pushes that respect friction, contact, and rigid-body constraints.

%They note that as only small subset of pushes are possible while keeping the pusher sticking and that sampled pushes are rejected most of the time. However, the simplified dynamics of the problem mitigates this limitation.

These promising results on in-hand manipulation research, which limit to certain types of manipulation primitives for efficient planing and control, motivate us to generalize the concept of motion cones to more general pushing tasks.
Motion cones, as a set of direct constraints on the object motion or control inputs, can naturally fit well in a trajectory optimization framework. 
As we illustrate in this paper, for sampling-based methods, the motion cone can be used to guide sampling, and for fast dynamics propagation. 
%In general, the motion cone provides an efficient way to capture the dynamics of the contact interactions without explicit contact constraints which are often difficult to solve.
% %
\section{Mechanics of Pushing in a Plane}
\label{sec:pushing_mechanics}
\begin{figure*}[t]
\centering
\includegraphics[scale=0.95]{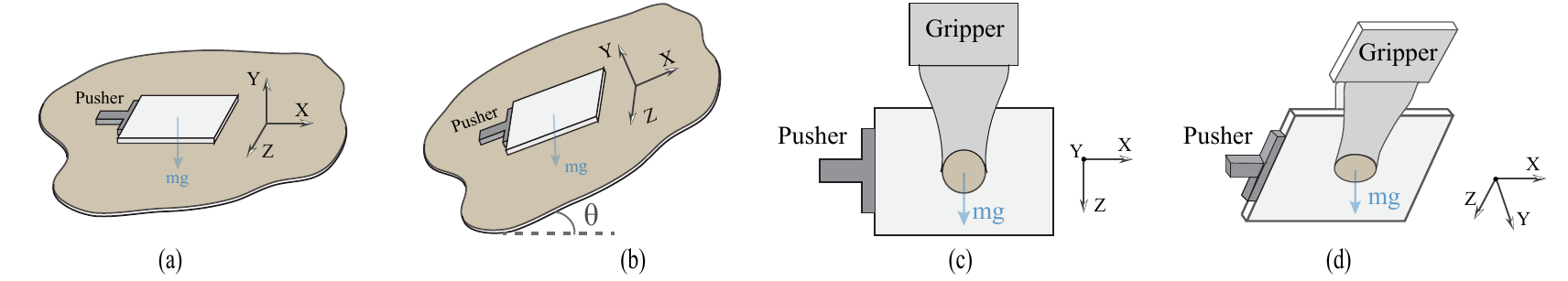}
\caption{Pushing an object (a) on a horizontal surface, (b) on an inclined surface, (c) in a grasp in the gravity plane, and (d) in a grasp in a tilted plane}
\label{fig:pushing_comparison}
\vspace{-2mm}
\end{figure*} 
\figref{fig:pushing_comparison} shows four different cases of planar pushing. In case (a), the pusher force is the only external force on the object in the plane of motion. However, in the rest of the cases, a component of gravity is also present.
The concept of the motion cone as originally studied in ~\citep{mason86} is limited to the case (a) in \figref{fig:pushing_comparison}. Our extension of the motion cone to a general planar case is valid for all the cases in \figref{fig:pushing_comparison}.

In this section we discuss the mechanics of pushing an object in a plane.
First we will review the fundamental concepts, namely, the \textit{limit surface}~\citep{GoyalPhD89}, and the \textit{generalized friction cone}~\citep{Erdmann94} that will help us model the friction interaction at the contacts involved in pushing.

\subsection{Limit Surface} 
\label{sec:limit_suface}

The limit surface is a common approach to model a friction interaction between an object and a support contact. In case (a) and (b) in \figref{fig:pushing_comparison}, the surface on which the object rests is the support contact. In
case (c) and (d), the  finger contacts play the same role.

\citet{GoyalPhD89} defined the boundary of the set of all possible friction wrenches that a contact can offer as the \textit{Limit Surface}. 
\citet{howe96, xydas99} showed that an ellipsoidal approximation allows for a simpler representation of the limit surface geometry. In this paper we will assume an ellipsoidal approximation of the limit surface, which has been shown to be computationally efficient for simulating and planning pushing motions~\cite{lynch96,Dogar2011,lynch15,Jiaji17b}.

Let $\boldsymbol{w}=[f_\textnormal{x}, f_\textnormal{z}, m_\textnormal{y}]$ be a frictional wrench on the object from the support contact in the contact frame.
A mathematical representation of the ellipsoidal limit surface is given by $\boldsymbol{w}^T A \boldsymbol{w}=1$, where $A=Diag(a_1^{-2},a_2^{-2},a_3^{-2})$.
For isotropic friction, the maximum friction force is, $a_1= a_2=\mu_{c} N$, where $\mu_{c}$ is the friction coefficient between the contact and the object and $N$ is the normal force at the contact. The maximum friction torque about the contact normal is $a_3=rc\mu_{c} N$, where $r$ is the radius of the contact and $c \in [0,1]$ an integration constant. For a uniform pressure distribution at the contact, $c$ is about 0.6~\citep{xydas99,lynch15}.

When the object slides on the support contact, the friction wrench between the object and the contact ($\boldsymbol{w}_c$)  intersects the limit surface.  Based on the maximal energy dissipation principle, the normal to the limit surface at the intersection point provides the direction of the twist of the object at the contact.
Conversely, if the object twist ($\boldsymbol{v_\textnormal{obj}}=[v_x, v_z, \omega_y]^T$) is known, we can find the friction wrench following ~\citep{lynch15} as,
\begin{equation}
\label{eq:vel2wrench}
 \boldsymbol{w}_c= \frac{A^{-1}\boldsymbol{v_\textnormal{obj\_c}}}{\sqrt{\boldsymbol{v_\textnormal{obj\_c}}^T A^{-1} \boldsymbol{v_\textnormal{obj\_c}}}}  = \mu_c N \boldsymbol{\overline{w}}_c
\end{equation}
Here, $\boldsymbol{v}_{obj\_c}$ is the velocity of the object in the contact frame and can be computed from $\boldsymbol{v_\textnormal{obj}}$ as $\boldsymbol{v}_{obj\_c}=\boldsymbol{{J}_\textnormal{c}} \cdot \boldsymbol{v}_{obj}$. The Jacobian $\boldsymbol{{J}_\textnormal{c}}$ maps the object velocity from the object frame to the support contact frame.
$\boldsymbol{\overline{w}}_c=[\overline{f}_\textnormal{x}, \overline{f}_\textnormal{z}, \overline{m}_\textnormal{y}]^T$ is the unit wrench lying on a limit surface that is scaled by maximum linear friction available at contact ($\mu_c N$) to produce the net frictional wrench. 
%
% \begin{equation}
% \label{eq:ellipsoid_wrench}
%  \frac{\overline{f}_\textnormal{x}}{1}+\frac{\overline{f}_\textnormal{z}}{1}+\frac{\overline{m}_\textnormal{y}}{(rc)^2}=1
% \end{equation}
% %

Under the ellipsoidal limit surface model assumption, translational velocity $[v_\textnormal{x\_c}, v_\textnormal{z\_c}]^T$ of the object in the contact frame is always parallel and opposite to the linear frictional force $[\overline{f}_x, \overline{f}_z]^T$ applied by the contact in the contact frame~\citep{lynch92}. Moreover, the relationship between the friction wrench and the normal to the limit surface, which defines the motion direction, sets the following constraint between the angular velocity at the contact and the linear velocity:
\begin{equation}
 \frac{v_\textnormal{x\_c}}{{\omega}_\textnormal{y\_c}}={(rc)^2} \frac{\overline{f}_\textnormal{x}}{\overline{m}_\textnormal{y}} \ \ \textnormal{
 and} \ \
 \frac{v_\textnormal{z\_c}}{{\omega}_\textnormal{y\_c}}={(rc)^2} \frac{\overline{f}_\textnormal{z}}{\overline{m}_\textnormal{y}}
\end{equation}

Given the friction wrench on the object from the support contact, we can find the object velocity as:
\begin{equation}
\label{eq:wrench2vel}
 \boldsymbol{v}_{obj} = \Tilde{k} \boldsymbol{\Tilde{J}_\textnormal{c}} \boldsymbol{B} \cdot \boldsymbol{\overline{w}}_c \ , \  \boldsymbol{B}= Diag (1, 1, (rc)^{-2} ), \ \ \Tilde{k} \in {\rm I\!R}^+
\end{equation}
Here, $\boldsymbol{\Tilde{J}_\textnormal{c}}$ maps the object velocity from the support contact frame to the object frame.

\subsection{Generalized Friction Cone}
\label{sec:generalized_cone}
The friction between the pusher and the object can be modelled with the Coulomb friction law.
\citet{Erdmann94} introduced the concept of \textit{generalized friction cone} ($\boldsymbol{W}$) as a representation of the local friction cone at a contact in the object frame. The generalized friction cone for a pusher modelled with multiple point contacts is the convex hull of the generalized friction cones for each constituent contact~\cite{Erdmann93}. %\figref{fig:wrenchset} shows an example of a generalized friction cone for a line pusher.
\begin{equation}
\label{eq:wrench_cone}
 \boldsymbol{W_\textnormal{pusher}} = \{ \boldsymbol{\overline{w}_\textnormal{pusher}} = \boldsymbol{J^\top_\textnormal{p}}\cdot \boldsymbol{\overline{f}_\textnormal{p}} \ | \ \boldsymbol{\overline{f}_\textnormal{p}} \in FC_\textnormal{pusher}\}
\end{equation}

Here, $\boldsymbol{J^\top_\textnormal{p}}$ is the Jacobian that maps the local contact forces ($\boldsymbol{{f}_\textnormal{p}}$) at the pusher to the object frame. $\boldsymbol{\overline{w}_\textnormal{pusher}}$ is the unit wrench corresponding to unit force/s $\boldsymbol{\overline{f}_\textnormal{p}}$ inside the friction cone/s at the pusher constituent contact/s.

Now, with the approaches for contact modelling set, we look into formulating the mechanics of pushing in a plane.

\subsection{Mechanics of Pushing}

The motion of the object in the plane of motion evolves following the net wrench acting on it. Under the quasi-static assumption, which is appropriate for slow pushing operations, the inertial forces on the object are negligible and there is force balance:
\begin{equation}
\label{eq:force_balance}
 \boldsymbol{w_\textnormal{support}} +  \boldsymbol{w_\textnormal{pusher}} + m \boldsymbol{g}=0
\end{equation}

Here,~\eref{eq:force_balance} is written in the object frame located at the center of gravity. $\boldsymbol{w_\textnormal{support}}$ is the friction wrench provided by the support contact, $\boldsymbol{w_\textnormal{pusher}}$ is the wrench exerted by the pusher, $m$ is the mass of the object, $\boldsymbol{g}$ is the gravitational component in the plane of motion. 
The Jacobian $\boldsymbol{J^\top_\textnormal{c}}$ maps the support contact wrench from the contact frame (usually located at the center of pressure) of the support surface to the object frame. So, $\boldsymbol{w_\textnormal{support}}=\boldsymbol{J^\top_\textnormal{c}} \cdot \boldsymbol{{w}_\textnormal{c}}$ and  \eref{eq:force_balance} becomes:  
\begin{equation}
\label{eq:force_balance_prepush}
 \mu_\textnormal{c} N \boldsymbol{J^\top_\textnormal{c}} \cdot \boldsymbol{\overline{w}_\textnormal{c}} +  \boldsymbol{J^\top_\textnormal{p}}\cdot \boldsymbol{f_\textnormal{p}} + m\boldsymbol{g} = 0
\end{equation}

\subsection{Stable Pushing in a Plane}
\label{sec:stable_pushing}
%
% \begin{figure}
% \centering
% \includegraphics[scale=0.3]{figures/gen_fric_cone.pdf}
% \caption{The generalized friction cone for a line pusher of length = $25$~mm and coefficient of friction = $0.2$.}
% \label{fig:wrenchset}
% \end{figure} 
%
\citet{lynch96} studied pushing motions for which the pusher sticks to the object when pushing on a horizontal surface, which they referred to as \textit{Stable Pushing}.
For stable pushing, force at the pusher lies inside the friction cone in the local contact frame.

For a general planar case, such a condition for a stable push can be written as:
\begin{equation*}
% \label{eq:stable_check_prepush1}
\begin{split}
  \mu_\textnormal{c} N \boldsymbol{J^\top_\textnormal{c}} \cdot \boldsymbol{\overline{w}_\textnormal{c}} +  \boldsymbol{J^\top_\textnormal{p}}\cdot \boldsymbol{f_\textnormal{p}} + m\boldsymbol{g} = 0 \ , \ \boldsymbol{{f}_\textnormal{p}} \in FC_\textnormal{pusher}
\end{split}
\end{equation*}
For a given object motion to be possible with a stable push, the pusher needs to be able to provide a wrench that balances the net wrench produced by the friction wrench from the support contact and the gravitational force in the plane.
Using \eref{eq:wrench_cone} we can rewrite the previous equation as:
\begin{equation}
\label{eq:stable_check_prepush1}
\begin{split}
  -\mu_\textnormal{c} N \boldsymbol{J^\top_\textnormal{c}} \cdot \boldsymbol{\overline{w}_\textnormal{c}} - m\boldsymbol{g} = k \boldsymbol{\overline{w}_\textnormal{pusher}} \\ 
  \boldsymbol{\overline{w}_\textnormal{pusher}} \in \boldsymbol{W_\textnormal{pusher}} \ , \ k \in {\rm I\!R}^+
\end{split}
\end{equation}

Here, $k$ is the magnitude of the pusher force. To know if an object motion can be achieved with a stable push, we simply need to check if the net required wrench falls inside the generalized friction cone of the pusher.
\begin{equation}
\label{eq:stable_check_prepush2}
\begin{split}
  -\mu_\textnormal{c} N \boldsymbol{J^\top_\textnormal{c}} \cdot \boldsymbol{\overline{w}_\textnormal{c}} - m\boldsymbol{g} \in \boldsymbol{W_\textnormal{pusher}}
\end{split}
\end{equation}
%

%For a desired object motion we can find the frictional wrench the support would offer from \eref{eq:vel2wrench} and then check if this support wrench falls inside the generalized friction cone of the pusher. If it does, the desired object motion can be achieved with a stable push, otherwise not.

From a planning and control perspective, rather than querying if pushes are stables pushes or not, the bound on the set of object motions possible with stable pushes is more useful. This is in fact the motivation for the motion cone concept that we study in the next section.
\section{Motion Cone for Planar Pushing}
\label{sec:cone_mechanics}
% %
% \begin{figure}
% \centering
% \includegraphics[scale=0.3]{figures/horz_vs_grav_incone.pdf}
% \caption{The net wrench required for pushing an object horizontally along $X$ without gravity (shown in red) falls inside the generalized friction cone of the pusher while the net wrench for the same motion with gravity along $-Z$ (shown in blue) falls outside. Therefore, horizontal push for the set gripping force and friction parameters is a stable push in the gravity-free case, but not in the presence of gravity.}
% \label{fig:horz_vs_grav_incone}
% \end{figure} 
% %

The motion cone is the set of objects motions that can be produced while keeping the pusher contact sticking. In an abstract sense, it is the motion equivalent of the generalized friction cone of the pusher. 

\textbf{\textit{Problem}}: Find the set of object motions for which the net required wrench can be balanced by a wrench inside the generalized friction cone of the pusher.

This is equivalent to finding a set of object motion for which constraint \eref{eq:stable_check_prepush1} holds true. 
Rewriting,
\begin{equation*}
  -\mu_\textnormal{c} N \boldsymbol{J^\top_\textnormal{c}} \cdot \boldsymbol{\overline{w}_\textnormal{c}} - m\boldsymbol{g} = k \boldsymbol{\overline{w}_\textnormal{pusher}}, \  \boldsymbol{\overline{w}_\textnormal{pusher}} \in \boldsymbol{W_\textnormal{pusher}}, \ k \in {\rm I\!R}^+
\end{equation*}
\vspace{-4mm}
\begin{equation}
\label{eq:motioncone_eq}
\begin{split}
\boldsymbol{J^\top_\textnormal{c}} \cdot \boldsymbol{\overline{w}_\textnormal{c}} = \frac{k}{-\mu_\textnormal{c} N} \boldsymbol{\overline{w}_\textnormal{pusher}} + \frac{m}{-\mu_\textnormal{c} N}\boldsymbol{g} \\ 
  \boldsymbol{\overline{w}_\textnormal{pusher}} \in \boldsymbol{W_\textnormal{pusher}} \ , \ k \in {\rm I\!R}^+
\end{split}
\end{equation}
Using \eref{eq:wrench2vel} we can map the support contact wrench $\boldsymbol{\overline{w}_\textnormal{c}}$ to the object velocity $\boldsymbol{v}_{obj}$. Hence, to find a motion cone, we first find the set of support contact wrenches ($\boldsymbol{\overline{w}_\textnormal{c}}$) that satisfy \eref{eq:motioncone_eq} and then map this wrench-set ($\boldsymbol{\Tilde{W}_\textnormal{c}}$) to the set of object twists. We denote this object twist-set, i.e. the motion cone, by $\boldsymbol{\Tilde{V}_\textnormal{obj}}$.

The presence of an external force, (such as the gravitational force) other than the pusher force, in the plane of motion complicates the mechanics and the structure of the motion cone. To explain this effect in detail, we will first consider the case of pushing an object on a horizontal surface where there is no such additional force.

\subsection{Motion Cone for Pushing on a Horizontal Surface}
\label{sec:motioncone_horz}
\begin{figure}
\centering
\includegraphics[scale=1]{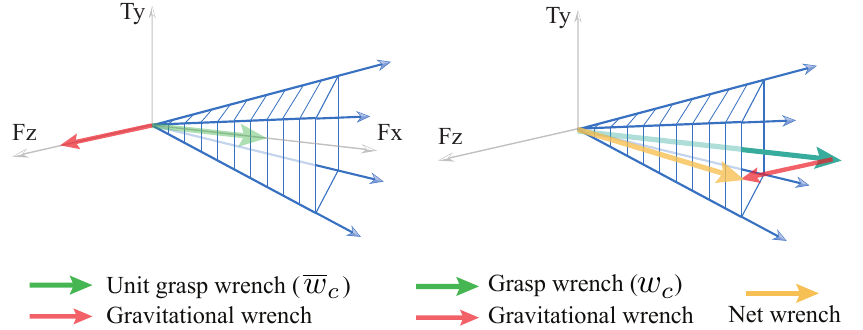}
\caption{To make a push inside the gravity-free motion cone stable in the gravity plane, the unit grasp wrench can be scaled such that the net pusher wrench required for the desired push falls inside/on the generalized friction cone of the pusher.}
\label{fig:min_scale}
\vspace{-2mm}
\end{figure} 
For the case of pushing on a horizontal plane, $\boldsymbol{g}=\boldsymbol{0}$. For an object on a flat support surface with uniform pressure distribution on the support (as is \figref{fig:pushing_comparison}- case (a) and (b)), the contact frame coincides with the object frame, $\boldsymbol{J_\textnormal{c}}= \boldsymbol{\Tilde{J}_\textnormal{c}}=\boldsymbol{I}$ (identity matrix).   
Then we can write \eref{eq:stable_check_prepush1} as:
\begin{equation*}
% \label{eq:stablepush_horz}
  -\mu_\textnormal{c} N  \boldsymbol{\overline{w}_\textnormal{c}} = k \boldsymbol{\overline{w}_\textnormal{pusher}} \ , \ \boldsymbol{\overline{w}_\textnormal{pusher}} \in \boldsymbol{W_\textnormal{pusher}} \ , \ k \in {\rm I\!R}^+
\end{equation*}
\vspace{-5mm}
\begin{equation*}
\label{eq:stablepush_horz}
 \boldsymbol{\overline{w}_\textnormal{c}}=\frac{-k \boldsymbol{\overline{w}_\textnormal{pusher}}} {\mu_{c} N} \ , \ \boldsymbol{\overline{w}_\textnormal{pusher}} \in \boldsymbol{W_\textnormal{pusher}}
 \end{equation*}

The set of valid support contact wrenches is the negative of the generalized friction cone of the pusher, i.e, $\boldsymbol{\Tilde{W}_\textnormal{c}} = -\boldsymbol{W_\textnormal{pusher}}$.
By mapping $\boldsymbol{\Tilde{W}_\textnormal{c}}$ through \eref{eq:wrench2vel}, we get the motion cone $\boldsymbol{\Tilde{V}_\textnormal{obj}}$
\footnote{\citet{mason86} defined the motion cone in terms of pusher twists which is a linear Jacobian transform of the motion cone $\boldsymbol{V_\textnormal{obj}}$. However, we will keep it in the object twist space.}.
 
Note that for the case of pushing on a horizontal surface,  $\boldsymbol{\Tilde{W}_\textnormal{c}}$ and $\boldsymbol{\Tilde{V}_\textnormal{obj}}$ are convex polyhedral cones. Moreover, they are independent of the support normal force, i.e, the weight of the object $mg$, and friction at the support surface $\mu_c$. 
%From \eref{eq:support_wrench_cone} we can see that based on the mass of the object and coefficient of friction at the support we will need more or less magnitude ($k$) of pusher wrench to move the object, but these parameters do not change the direction vectors of $\boldsymbol{\Tilde{W}_\textnormal{support}}$ or $\boldsymbol{V_\textnormal{obj}}$.
    
%    \item [$\Cdot$] $\boldsymbol{\Tilde{W}_\textnormal{c}}$ and $\boldsymbol{\Tilde{V}_\textnormal{obj}}$ do not change in the object frame. For all the motions inside the motion cone, the pusher sticks to the object. So, the generalized friction cone of the pusher remains constant, consequently, $\boldsymbol{\Tilde{W}_\textnormal{support}}$ and $\boldsymbol{V_\textnormal{obj}}$ too remain constant. This fact makes planning and control for non-prehensile pushing very computationally efficient. 

For a more general pushing tasks however, $\boldsymbol{g}\neq\boldsymbol{0}$. 
%So, we can not simply write $\boldsymbol{\overline{w}_\textnormal{c}} = -\boldsymbol{\overline{w}_\textnormal{pusher}}$ and $\boldsymbol{\Tilde{W}_\textnormal{c}} = -\boldsymbol{W_\textnormal{pusher}}$. 
From \eref{eq:motioncone_eq} we can see that, unlike for horizontal pushing, the system parameter ($\mu_c$) and force magnitudes ($k$ and $N$) influence the direction vectors of the wrench-set $\boldsymbol{\Tilde{W}_\textnormal{c}}$. 
In the next section we will focus on the case in \figref{fig:pushing_comparison}-(c) -- pushing an object in a parallel-jaw grasp in the plane of gravity. 
There the gravitational force is not zero in the plane of motion and the Jacobians $\boldsymbol{J_\textnormal{c}}$ and $\boldsymbol{\Tilde{J}_\textnormal{c}}$ are not always identity matrices as the support (finger) contact location changes in the object frame as the object is pushed in the grasp. 
The case shown in  \figref{fig:pushing_comparison}-(b) is same as the case (c) except the Jacobians $\boldsymbol{J_\textnormal{c}}$ and $\boldsymbol{\Tilde{J}_\textnormal{c}}$ are always identity matrices. 
The case shown in  \figref{fig:pushing_comparison}-(d) is same as the case (c) except that only a part of the gravitational force acts in the plane of motion.
%
%For the case of non-prehensile manipulation (\figref{fig:pushing_comparison}-(a, b)), the net support normal force $N$ is equal and opposite to the gravitational force perpendicular to the plane of motion, while for prehensile manipulation in a parallel jaw grasp (\figref{fig:pushing_comparison}-(c, d)), it is equal to the gripping force.  

%
\begin{figure*}[t]
\centering
\includegraphics[scale=1]{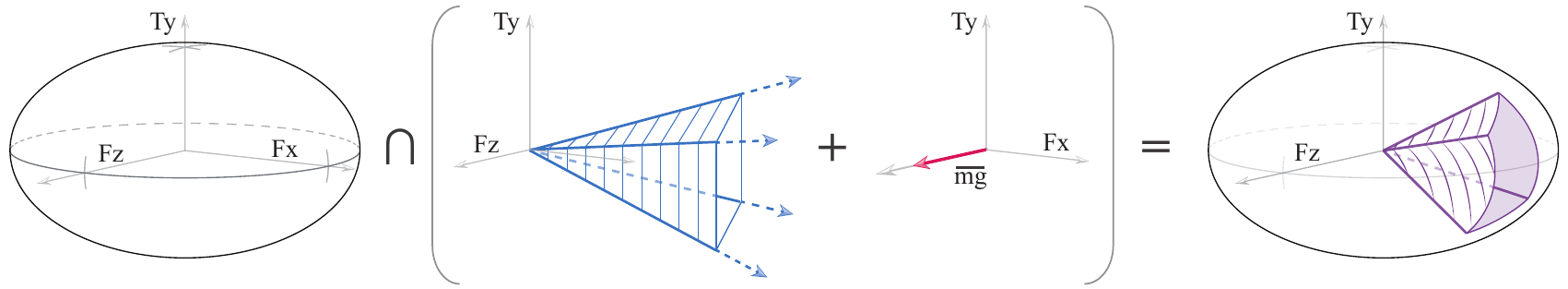}
\caption{A depiction of the process for constructing a wrench-set ($\boldsymbol{\Tilde{W}_\textnormal{c}}$).
The intersection of the limit surface with the sum of the scaled generalized friction cone of the pusher and the gravitational wrench defines the wrench-set $\boldsymbol{\Tilde{W}_\textnormal{c}}$.}
\label{fig:wrenchcone_constr}
% \vspace{mm}
\end{figure*}

\subsection{Stable Pushing and Motion Cone in the Gravity Plane}
For a case similar to \figref{fig:pushing_comparison}-(c), but in a gravity-free world, we can exploit the simplification of the \eref{eq:motioncone_eq} by omitting the gravity term and compute a convex polyhedral motion cone similar to that in the horizontal pushing case, but while taking the non-identity Jacobians $\boldsymbol{J_\textnormal{c}}$ and $\boldsymbol{\Tilde{J}_\textnormal{c}}$ into consideration. We will refer to this motion cone as a gravity-free motion cone in the later discussions. 

From \eref{eq:stable_check_prepush2} we see that the gravitational wrench $m\boldsymbol{g}$ on the object pulls the net required wrench in or out of the generalized friction cone of the pusher. 
Some of the motions in the gravity-free motion cone may not be stable pushes in the gravity plane, while for some object motions outside the gravity-free motion cone, the gravitational force on the object can be exploited to make them stable pushes.
%
%\figref{fig:horz_vs_grav_incone} shows that a horizontal push is always stable in gravity-free pushing case, but it may not be stable in the gravity plane depending on the set gripping force and friction parameters.

%From \eref{eq:stable_check_prepush2} we can see that the gripping force ($N$) on the object influences the net wrench required to move the object and adjust it such that it falls inside the generalized friction cone of the pusher.

\begin{theorem}
Any push inside the gravity-free motion cone can also be made a stable push in the gravity plane by increasing the grasping force above a minimum force threshold.
\end{theorem}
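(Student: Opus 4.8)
The plan is to start from the stable-push condition \eref{eq:stable_check_prepush2}, which requires that $-\mu_\textnormal{c} N \boldsymbol{J^\top_\textnormal{c}} \cdot \boldsymbol{\overline{w}_\textnormal{c}} - m\boldsymbol{g}$ lie inside the generalized friction cone $\boldsymbol{W_\textnormal{pusher}}$. Fix an arbitrary object motion $\boldsymbol{v}_{obj}$ in the gravity-free motion cone. By the very definition of the gravity-free motion cone (the $\boldsymbol{g}=\boldsymbol{0}$ specialization of \eref{eq:motioncone_eq}), the associated support-contact wrench $\boldsymbol{\overline{w}_\textnormal{c}}$ — obtained by inverting the limit-surface map \eref{eq:wrench2vel} — satisfies $-\mu_\textnormal{c} N \boldsymbol{J^\top_\textnormal{c}} \cdot \boldsymbol{\overline{w}_\textnormal{c}} = k_0\,\boldsymbol{\overline{w}_\textnormal{pusher}}$ for some $\boldsymbol{\overline{w}_\textnormal{pusher}} \in \boldsymbol{W_\textnormal{pusher}}$ and some $k_0 \in {\rm I\!R}^+$. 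The key observation is that $\boldsymbol{\overline{w}_\textnormal{c}}$ is a \emph{unit} wrench on the limit surface whose direction depends only on $\boldsymbol{v}_{obj}$ (through \eref{eq:wrench2vel}) and not on the normal force $N$; only the physical wrench $\boldsymbol{w}_c = \mu_c N \boldsymbol{\overline{w}}_c$ scales with $N$.

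Next I would write the gravity-plane balance for the same motion with grasp force $N$: the net wrench the pusher must supply is $\boldsymbol{q}(N) := -\mu_\textnormal{c} N \boldsymbol{J^\top_\textnormal{c}} \cdot \boldsymbol{\overline{w}_\textnormal{c}} - m\boldsymbol{g} = N k_0' \boldsymbol{\overline{w}_\textnormal{pusher}} - m\boldsymbol{g}$, where $k_0' = k_0/N_0$ packages the $N$-independent part (here $N_0$ is the grasp force used to define the gravity-free cone). Dividing by $N$, the \emph{direction} of $\boldsymbol{q}(N)$ is that of $k_0'\,\boldsymbol{\overline{w}_\textnormal{pusher}} - \tfrac{m}{N}\boldsymbol{g}$. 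As $N \to \infty$ this direction converges to that of $\boldsymbol{\overline{w}_\textnormal{pusher}}$, which lies in the closed cone $\boldsymbol{W_\textnormal{pusher}}$. Since $\boldsymbol{W_\textnormal{pusher}}$ is a closed polyhedral cone and the map $N \mapsto (\text{direction of } \boldsymbol{q}(N))$ is continuous, the ray $\boldsymbol{q}(N)$ enters $\boldsymbol{W_\textnormal{pusher}}$ for all $N$ above some threshold $N^\star$; a clean way to see this is that $\boldsymbol{q}(N)$ is a positive combination of $k_0'\boldsymbol{\overline{w}_\textnormal{pusher}}$ and $-\boldsymbol{g}$ with coefficient $m/N$ on the latter shrinking to zero, so eventually the bounded perturbation $-m\boldsymbol{g}$ cannot push the wrench out of the cone once the $\boldsymbol{\overline{w}_\textnormal{pusher}}$ component dominates — unless $\boldsymbol{\overline{w}_\textnormal{pusher}}$ lies exactly on the boundary and $-\boldsymbol{g}$ points strictly outward, which is the one degenerate case to handle (it corresponds to a motion on the boundary of the cone, and one may either argue it by a limiting/interior argument or note the theorem is vacuous there).

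The main obstacle I anticipate is exactly this boundary/degeneracy bookkeeping: making precise the claim ``a bounded additive perturbation of a ray strictly inside a closed convex cone stays inside the cone once the ray is scaled up enough,'' and deciding how to treat motions that sit on the boundary of the gravity-free cone (where $\boldsymbol{\overline{w}_\textnormal{pusher}}$ is on $\partial\boldsymbol{W_\textnormal{pusher}}$). For interior motions the argument is robust and essentially an $\varepsilon$--$\delta$ statement about half-space descriptions of $\boldsymbol{W_\textnormal{pusher}}$; for boundary motions one can appeal to continuity of the whole construction, or simply state the result for the interior and take closures. A secondary, more mechanical point to verify is that increasing $N$ does not change the \emph{shape} of the object-motion set under consideration — i.e., that $\boldsymbol{\overline{w}_\textnormal{c}}$, and hence the candidate push, genuinely does not depend on $N$ — which follows directly from \eref{eq:wrench2vel} since $\tilde k$ there is a free positive scalar and $\boldsymbol{B}$, $\boldsymbol{\Tilde{J}_\textnormal{c}}$ are independent of $N$. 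Once these are pinned down, taking $N^\star$ to be the infimum of feasible grasp forces over the (compact, after normalization) set of directions in the gravity-free cone gives the uniform ``minimum force threshold'' claimed in the statement.
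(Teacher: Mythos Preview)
Your proposal is correct and follows essentially the same approach as the paper: both arguments observe that membership in the gravity-free motion cone means $-\boldsymbol{J^\top_\textnormal{c}}\,\boldsymbol{\overline{w}_\textnormal{c}}$ already lies in $\boldsymbol{W_\textnormal{pusher}}$, and then scale that term via $\mu_c N$ until the fixed additive perturbation $-m\boldsymbol{g}$ can no longer push the net wrench out of the cone. The paper's proof is a two-line sketch backed by a picture (\figref{fig:min_scale}); your explicit limiting argument, boundary-case discussion, and compactness step for a uniform threshold are sound elaborations of that same geometric idea.
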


\begin{proof}
%\eref{eq:stable_check_prepush2}, the net wrench required for a particular object motion in the gravity plane, is the vector sum of the constant magnitude gravitational wrench and the support/grasp wrench scaled by grasping force and coefficient of friction at the fingers. 
For a motion inside a gravity-free motion cone, the support/grasp wrench direction lies inside the generalized friction cone of the pusher, i.e., $\boldsymbol{J^\top_\textnormal{c}} \cdot \boldsymbol{\overline{w}_\textnormal{c}} \in \boldsymbol{W_\textnormal{pusher}}$. We can always find a magnitude ($\mu_c N$) with which the support/grasp wrench direction needs to be scaled so that the net wrench (the vector sum of the gravitation wrench and the support/grasp wrench) is inside the generalized friction cone of the pusher. 
For a given $\mu_c$, we can analytically find the bounds on $N$ needed to pull the net wrench inside the generalized friction cone of the pusher. \figref{fig:min_scale} shows the graphical interpretation.
\end{proof}

Theoretically, we can make any motion in the gravity-free motion cone a stable motion in the gravity case by changing the grasping force.
With increased grasping force, more motions in a gravity-free motion cone are stable in the gravity plane and become part of the motion cone.
For infinite grasping force, the gravity-free motion cone coincides with the motion cone in the gravity plane. However, in practice, grippers have limited grasping force. 
%Moreover, by using the gravitational force to our advantage we can perform some of the object motions outside the motion cone. 
We need to find the motion cone for a fixed grasping force.

\begin{figure}[t]
\centering
\includegraphics[scale=0.9]{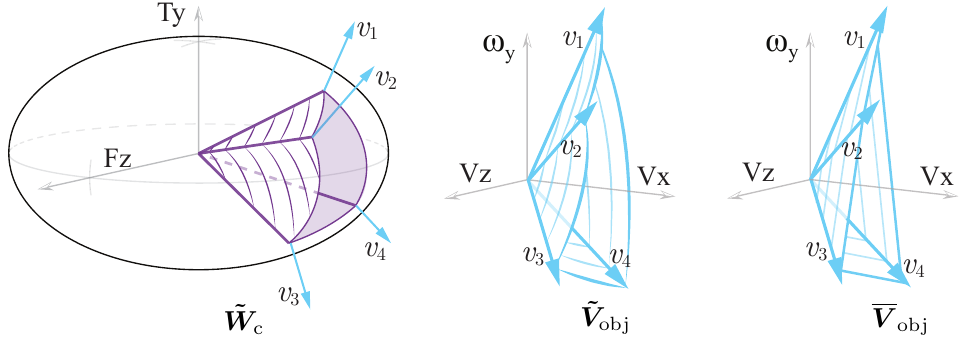}
\caption{A graphical illustration for the construction of the motion cone ($\boldsymbol{\Tilde{V}_\textnormal{obj}}$) from the wrench-set ($\boldsymbol{\Tilde{W}_\textnormal{c}}$). The motion cone is defined by the set of the surface normals to the limit surface where the wrench-set intersects the limit surface. We propose polyhedral approximation ($\boldsymbol{\overline{V}_\textnormal{obj}}$) to the motion cone for computational efficiency.}
\label{fig:motioncone_constr}
% \vspace{-3mm}
\end{figure}
\section{Motion Cone in The Gravity Plane for Fixed Grasping Force}
\label{sec:motioncone_computation}
%
%(left) The support/grasp wrench cone [$\boldsymbol{\Tilde{W}_\textnormal{c}}$] (center) the motion cone $\boldsymbol{\Tilde{V}_\textnormal{obj}}$ (right) the polyhedral approximation to the motion cone ($\boldsymbol{\overline{V}_\textnormal{obj}}$)
%
% \begin{figure}
% \centering
% \includegraphics[scale=0.25]{figures/45n_wrenchcone_final.pdf}
% \caption{Unit grasp wrench cone for 45 N grasping force is shown from two different views. Note that the faces defining the wrench cone are curved and are colored with different colors for better visualization.}
% \label{fig:nonlin_wrenchcone}
% \end{figure} 

% \begin{figure}
% \centering
% \includegraphics[scale=0.25]{figures/45n_motioncone_final.pdf}
% \caption{Motion cone for 45 N grasping force is shown from two different views. Note that the surfaces defining the motion cone are curved. The generators of the cone are scaled such that displacement along X is $1$~m}
% \label{fig:nonlin_motioncone}
% \end{figure} 
%
In this section we find the object motion cone in the gravity case for a given grasping force and friction parameters. 

\subsection{Analytical Computation}
\label{sec:analytical_motioncone}

% Rewriting \eref{eq:motioncone_eq},
% \begin{equation}
% \label{eq:motioncone_eq2}
% \begin{split}
% \boldsymbol{G_\textnormal{c}} \cdot \boldsymbol{\overline{w}_\textnormal{c}} = \frac{k}{-\mu_\textnormal{c} N} \boldsymbol{\overline{w}_\textnormal{pusher}} + \frac{m}{-\mu_\textnormal{c} N}\boldsymbol{g} \\ 
%   \boldsymbol{\overline{w}_\textnormal{pusher}} \in \boldsymbol{W_\textnormal{pusher}} \ , \ k \in {\rm I\!R}^+
% \end{split}
% \end{equation}

For a known $\boldsymbol{\overline{w}_\textnormal{pusher}} \in \boldsymbol{W_\textnormal{pusher}}$, \eref{eq:motioncone_eq} is a set of three linear equalities with 4 unknowns,  $\boldsymbol{\overline{w}_\textnormal{c}} \in {\rm I\!R}^3$ and $k$. However, we know that $\boldsymbol{\overline{w}_\textnormal{c}} =[\overline{f}_\textnormal{x}, \overline{f}_\textnormal{z}, \overline{m}_\textnormal{y}]^T$ is a unit wrench that satisfies the ellipsoidal limit surface constraint:
\begin{equation}
\label{eq:ellipsoid_wrench2}
 \frac{\overline{f}^2_\textnormal{x}}{1}+\frac{\overline{f}^2_\textnormal{z}}{1}+\frac{\overline{m}^2_\textnormal{y}}{(r c)^2}=1
\end{equation}
%
% Here, $r$ is the radius of finger contact.
%
Constraints \eref{eq:motioncone_eq} and \eref{eq:ellipsoid_wrench2} can be solved together analytically to find $\boldsymbol{\overline{w}_\textnormal{c}}$ and $k$. Specifically, after substituting $\overline{f}_\textnormal{x}, \overline{f}_\textnormal{z},$ and $\overline{m}_\textnormal{y}$ from \eref{eq:motioncone_eq} into \eref{eq:ellipsoid_wrench2}, \eref{eq:ellipsoid_wrench2} becomes a quadratic equation in $k$. 
Solving this quadratic equation with a constraint $k \in {\rm I\!R}^+$ gives a unique solution for $k$.
Substituting this value for $k$ in \eref{eq:motioncone_eq} makes \eref{eq:motioncone_eq} a set of three linear equalities with three unknowns $[\overline{f}_\textnormal{x}, \overline{f}_\textnormal{z},\overline{m}_\textnormal{y}]$ which can be solved for a unique solution. 

For prehensile pushing in the gravity plane, the relationship between $\boldsymbol{\Tilde{W}_\textnormal{c}}$ and  $\boldsymbol{W_\textnormal{pusher}}$ is not linear as in the gravity-free case. To find wrench-set $\boldsymbol{\Tilde{W}_\textnormal{c}}$, we need to sweep $\boldsymbol{\overline{w}_\textnormal{pusher}}$ over the boundary of $\boldsymbol{W_\textnormal{pusher}}$ and solve \eref{eq:motioncone_eq} and \eref{eq:ellipsoid_wrench2} iteratively. 
\figref{fig:wrenchcone_constr} is the depiction of the process involved in solving the constraints~\eref{eq:motioncone_eq} and \eref{eq:ellipsoid_wrench2} together for computing the wrench cone $\boldsymbol{\Tilde{W}_\textnormal{c}}$. \figref{fig:motioncone_constr} shows the graphical representation of the computation in~\eref{eq:wrench2vel} that maps the wrench-set $\boldsymbol{\Tilde{W}_\textnormal{c}}$ to the motion cone $\boldsymbol{\Tilde{V}_\textnormal{obj}}$.

\begin{figure}[t]
\centering
\includegraphics[scale=0.22]{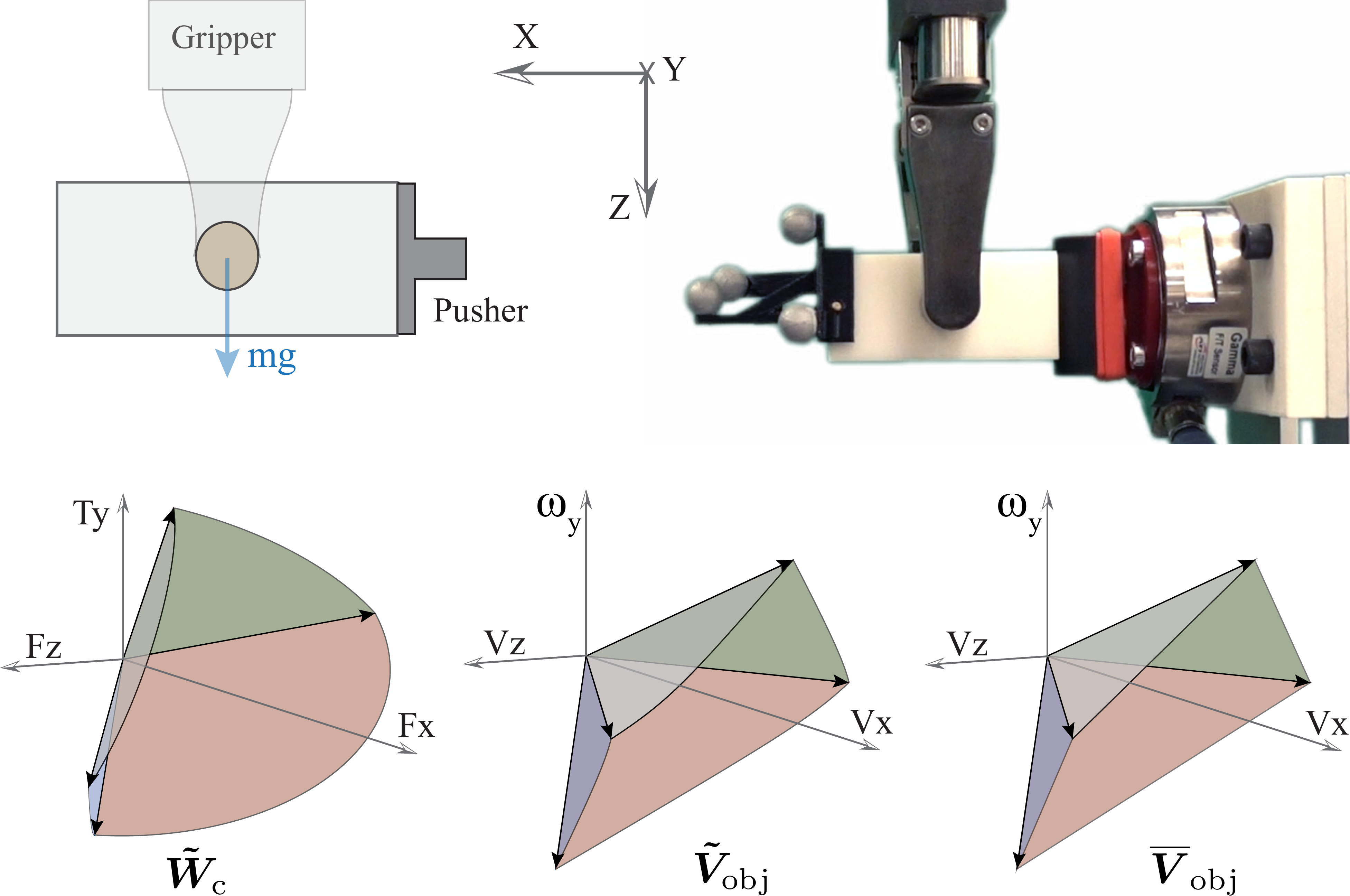}
\caption{(top) The grasp-pusher configuration used for the experimental validation of the motion cone. (bottom) The analytically computed wrench cone ($\boldsymbol{\Tilde{W}_\textnormal{c}}$), motion cone ($\boldsymbol{\Tilde{V}_\textnormal{obj}}$), and polyhedral approximation to the motion cone ($\boldsymbol{\overline{V}_\textnormal{obj}}$) for the above configuration and 45 N grasping force. Note that the surfaces defining the motion cone ($\boldsymbol{\Tilde{V}_\textnormal{obj}}$) are curved.}
\label{fig:45n_motioncone}
\vspace{-2mm}
\end{figure} 
$\boldsymbol{\Tilde{W}_\textnormal{c}}$ and $\boldsymbol{\Tilde{V}_\textnormal{obj}}$ are not polyhedral cones as in the gravity-free case, but rather can be best characterized as cones defined by low-curvature surfaces that intersect all in a point and pairwise in lines. \figref{fig:45n_motioncone} shows the wrench cone and motion cone computed analytically for one particular grasp-pusher configuration.

\subsection{Polyhedral Approximation to the Motion Cone}
\label{sec:linear_motioncone}
As an object is pushed in a grasp, the position of finger contacts in the object frame change, and consequently  $\boldsymbol{G_\textnormal{c}}$ and the motion cone $\boldsymbol{\Tilde{V}_\textnormal{obj}}$ also change. We need to compute the motion cone iteratively as the object moves in the grasp.

As the boundary surfaces of the motion cone have low curvatures, we propose a polyhedral approximation ($\boldsymbol{\overline{V}_\textnormal{obj}}$) of the motion cone for its efficient computation. Each edge of the polyhedral approximation of the motion cone is the object motion corresponding to each edge of the generalized friction cone of the pusher. \figref{fig:motioncone_constr} and \figref{fig:45n_motioncone} show the polyhedral approximation of the motion cone.

\myparagraph{Procedure to compute polyhedral motion cone}:
\begin{enumerate}
    \item Solve \eref{eq:motioncone_eq} and \eref{eq:ellipsoid_wrench2} simultaneously to get $\boldsymbol{\overline{w}_\textnormal{c}}$ for $\boldsymbol{\overline{w}_\textnormal{pusher}}$ corresponding to every edge of $\boldsymbol{W_\textnormal{pusher}}$.
    
    \item Define the set of  $\boldsymbol{\overline{w}_\textnormal{c}}$ computed in step 1 as the generators/edges of the support/grasp wrench-cone $\boldsymbol{\overline{W}_\textnormal{c}}$.
    
    \item Map $\boldsymbol{\overline{W}_\textnormal{c}}$ to the object twist space using \eref{eq:wrench2vel} to get the polyhedral approximation $\boldsymbol{\overline{V}_\textnormal{obj}}$ of the motion cone.
\end{enumerate}

\subsection{Experimental Validation of the Motion Cone}
\label{sec:exp_validation}
We use a manipulation platform equipped with an industrial robot arm, a parallel-jaw gripper, a feature in the environment that act as pusher, and a Vicon system for object tracking. 

To evaluate the experimental validity of the motion cone in the gravity plane, we collected data for the slip observed at the pusher contact for $2000$ randomly sampled pushes. 
We used the rectangular prism object listed in Table \ref{tab:objects} and the grasping force was set to $45$~N. 
\figref{fig:45n_motioncone} shows the grasp-pusher configuration used for the experiments.
Due to kinematic constraints of the robot and the workspace, we limit the sampled pushes to the space of $[0~\textnormal{mm}$-$10~\textnormal{mm}, -5~\textnormal{mm}$-$5~\textnormal{mm}, -35^{\circ}$- $35^{\circ}]$ displacements in $[X,Z,\theta_Y]$. 
%
%Due to limited sampling space, we could not carry out the pushes around the entire motion cone. However, for the pushes in the sample space, the motion cone fits well to the experimental data.
%
\figref{fig:motioncone_exp} shows the experimental results compared to the computed one.

%\figref{fig:motioncone_exp} shows that the polyhedral approximation of the motion cone captures most of the pushes for which the pusher contact sticks to the object. Most of the pushes that slip, fall outside the cone.
%
\begin{figure}
\centering
\includegraphics[scale=0.95]{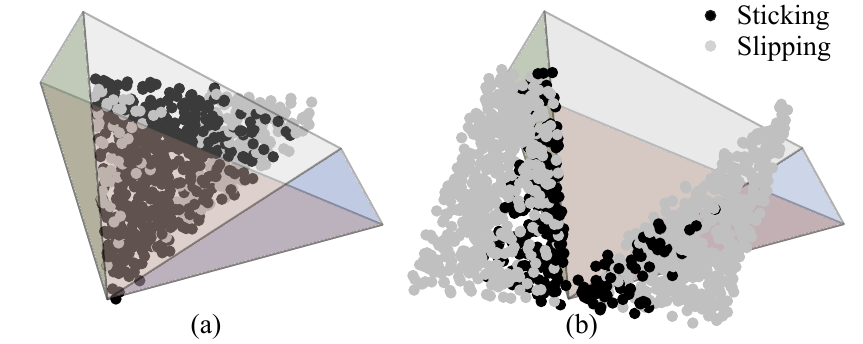}
\caption{$2000$ random prehensile pushes in the configuration shown in \figref{fig:45n_motioncone} are characterized by the slip observed at the pusher contact. The motion cone above is the same polyhedral motion cone ($\boldsymbol{\overline{V}_\textnormal{obj}}$) computed in \figref{fig:45n_motioncone}, but shown in a different orientation for better visualization.
(a) No slipping was observed for most of the pushes inside the polyhedral approximation of the motion cone, (b) Most of the pushes for which the slipping was observed are outside the motion cone.}
\label{fig:motioncone_exp}
\vspace{-2mm}
\end{figure} 
\section{Planning In-Hand Manipulations via\\ Motion Cones}
\label{sec:planning}

In this section we demonstrate the application of motion cones for planning in-hand manipulations with stable prehensile pushes.
The problem formulation is similar to the one presented in our recent work \citep{ChavanDafle2018a}. 
We assume that an object is grasped in a parallel-jaw gripper and manipulated in the gravity plane by pushing against features in the environment.
We assume the following physical properties of the system:
\begin{itemize}
    \item[$\Cdot$] Object geometry and mass.
    \item[$\Cdot$] Initial and goal pose of an object in a grasp, specified by the locations of each finger contacts.
    \item[$\Cdot$] Gripping force.
    \item[$\Cdot$] Set of pusher contacts, specified by their geometries and locations in the object frame.
    \item[$\Cdot$] Coefficient of friction at all contacts.
\end{itemize} 

In~\citep{ChavanDafle2018a}, we present a planning framework where at the high-level, a T-RRT$^*$-based architecture samples different object poses in the grasp~\citep{trrt_star,trrt}. At the low level, a rejection check is implemented using a constraint similar to \eref{eq:stable_check_prepush2} to evaluate if the sampled configuration can be reached using a stable prehensile push.
We adopt the same high-level T-RRT$^*$-based planning approach in this paper, but without the low-level rejection check. Rather, the planner always grows the tree towards the sampled pose as best as possible using the motion cones at the nearest node to the sampled pose.

%The high-level planning framework is based on T-RRT$^*$ - an optimal sampling based method developed for planning on configuration space cost-maps~\citep{trrt_star,trrt}.

\myparagraph{For selective exploration}, the TRRT* framework relies on a transition test that filters the sampled configurations to prefer exploration in low configuration-cost regions. We define the configuration cost as the distance from the goal. The transition test softly constrains the stochastic exploration towards the goal grasp, while allowing the flexibility to explore high-cost transitions if they are necessary to get the object to the goal.

\myparagraph{For effective connections}, the T-RRT$^*$ algorithm uses the underlying RRT$^*$~\citep{rrt_star} framework to make and rewire the connections in the tree at every step such that the cost of the nodes is reduced when possible.
We define the cost of a node as the sum of the cost of the parent node and the cost of the push to reach the sampled node from the parent node. We set the cost of a push $0.1$ if the parent node uses the same pusher as the child and $1$ otherwise.  
%the same pusher used to get to the parent node is used to reach the child node. It is $1$ if the pusher contact has to be changed. 
%
With our node cost definition, the planner generates pushing strategies that prefer fewer pusher switch-overs to push the object to the desired pose.

\begin{algorithm}
  \caption{: In-Hand Manipulation Planner}\label{alg:full_planner}
  $  \textbf{input}: q_{init}, q_{goal}$ \par
  $  \textbf{output}:$ {tree} $\ \mathcal{T}$
  \begin{algorithmic}
  \State $\mathcal{T}\gets \textrm{initialize tree}(q_{init})$
  \State $ \textrm{generate\_motionCones}(\mathcal{T},q_{init})$
      \While{$q_{goal} \notin \mathcal{T}$ \textbf{or} cost($q_{goal}) > \textrm{cost threshold} $}
        \State $q_{rand}\gets \textrm{sample random configuration}(\mathcal{C})$
        
        \State $q_{parent}\gets \textrm{find nearest neighbor}(\mathcal{T},q_{rand})$
        \State $q_{sample}\gets \textrm{take unit step}(q_{parent},q_{rand})$
        \If{$q_{sample} \notin \mathcal{T}$}
        
        \If{\textrm{transition test}$(q_{parent},q_{sample},\mathcal{T})$}
            \State $q_{new} \gets \textrm{motionCone\_push}(q_{parent},q_{sample})$
            
            \If{\textrm{transition test}$(q_{parent},q_{new},\mathcal{T})$ \textbf{and} \\ \hspace{19mm}\textrm{grasp maintained}$(q_{new})$} 

                \State $q\mbox{*}_{parent}\gets \textrm{optimEdge}(\mathcal{T},q_{new},q_{parent})$
                
                \State $\textrm{add new node}(\mathcal{T},q_{new})$
                
                \State $\textrm{add new edge}(q\mbox{*}_{parent},q_{new})$
                
                \State $ \textrm{generate\_motionCones}(\mathcal{T},q_{new})$
                
                \State $\textrm{rewire tree}(\mathcal{T},q_{new},q\mbox{*}_{parent})$
            \EndIf
            \EndIf
        \EndIf
    \EndWhile
  \end{algorithmic}
\end{algorithm}

Let $q$ denote a configuration of an object, i.e., the pose of the object in the gripper frame, which is fixed in the world.
In this paper, we are considering planar manipulations in a parallel-jaw grasp, so the configuration space $\mathcal{C}$ is $[X, Z, \theta_y] \in {\rm I\!R}^3$, i.e., the object can translate in the grasp plane ($XZ$) and rotate about a perpendicular ($Y$) to the grasp plane.

Algorithm \ref{alg:full_planner} shows our in-hand manipulation planner.
Let $q_{init}$ and $q_{goal}$ be an initial and desired pose of the object in the gripper frame respectively. 
The planner initiates a tree $\mathcal{T}$ with $q_{init}$ and generates motion cones at $q_{init}$. 

While the desired object pose is not reached within some cost threshold, a random configuration $q_{rand}$ is sampled. A nearest configuration $q_{parent}$ to $q_{rand}$ in the tree $\mathcal{T}$ is found and an unit-step object pose $q_{sample}$ towards $q_{rand}$ is computed. 
Using the transition test, the planner evaluates if moving in the direction of $q_{sample}$ from $q_{parent}$ is beneficial or not.
If it is beneficial, the \textit{motionCone\_push} routine computes an object configuration $q_{new}$ closest to $q_{sample}$ that can be reached using the motion cones at $q_{parent}$.
It is further checked if moving towards $q_{new}$ is beneficial and if $q_{new}$ is an object configuration at which the grasp on the object is maintained. 
If both the criteria are satisfied, $q_{new}$ is added to the tree such that the local cost of $q_{new}$ and the nodes near $q_{new}$ are lowered if possible. The motion cones are generated for every new node added to the tree.

Two important routines in Algorithm \ref{alg:full_planner}, particularly for this paper, are \textit{generate\_motionCones} and \textit{motionCone\_push}.

\myparagraph{\textit{generate\_motionCones}} computes polyhedral motion cones for a given object configuration in the grasp using the procedure listed in \secref{sec:linear_motioncone}. At every node, we will have the same number of motion cones as that of pushers.

\myparagraph{\textit{motionCone\_push}} finds an object pose closest to the desired sampled pose $q_{sample}$ that can be reached. This computation is done using the motion cones at the parent node $q_{parent}$.
If the object twist needed from the parent node pose to the sampled pose is already inside any of the motion cones, the sampled pose can be directly reached.
If the required object twist is outside all the motion cones, a twist that is inside one of the motion cones and closest to the desired twist is selected.
%If the desired object twist is outside a motion cone, a twist inside the motion cone closest to the desired twist is found and saved as a candidate twist.
%Finally, from the candidate twists in every motion cone, we choose the one that is closest to the desired twist.
%
%the desired twist is projected on the facets of the cone and further projected on appropriate generators if the projections on the facets lie outside the facets of the cone. Then finally the projection which is closest to the original desired twist is selected as the best candidate that can be reached with a stable push using that pusher. Practically, to avoid numerical issues while checking if the projected vector lies inside the cone or not, we push the candidate twist slightly inside the motion cone. 
%\footnote{This indirectly may help us avoiding choosing the object twists which are on the boundary of the polyhedral motion cone but outside the nonlinear curved surface of the true motion cone.}

The use of motion cones for fast low-level unit-step propagation of the system and T-RRT$^*$-based framework for high-level planning allows us to explore the configuration space of different object poses in the grasp and generate pushing strategies for the desired in-hand manipulation.

\begin{table}[b]
\vspace{-3mm}
  \caption{Physical properties of the experimental objects}
  \label{tab:objects}
	\centering
	\begin{tabular}{|l|l|c|r|}
         \hline
          \textbf{Shape} & \textbf{Material} & \textbf{Dim [L, B, H]} (mm) & \textbf{Mass} (g) \\ \hline
          square prism  & Al 6061 & 100, 25, 25 & 202\\ \hline  
          rectangular prism  & Delrin & 80, 25, 38 & 113\\ \hline
          T-shaped & ABS  & 70, 25, 50 & 62\\ \hline
    \end{tabular}
\end{table}

% \begin{comment}
% \begin{table*}
% \caption{Planning times (in seconds) for different Approaches for Unit-Step Propagation}
%   \label{tab:timing}
% 	\centering
% 	\begin{tabular}{|l|c|r|r|r|}
%          \hline
%           \textbf{Manipulation} & \textbf{Goal [$X, Z, \theta_Y $]} & \textbf{Planning Time}  & \textbf{Planning Time}   & \textbf{Planning Time}\\
%           & [mm, mm, deg] & (Motion Cone)  & (Stable Check)~\cite{ChavanDafle2018a} & (MNCP)~\cite{ChavanDafle2017} \\
%           \hline
%           Horz. offset (low $\mu$)  & 20, 0, 0 & 0.45 & 2.83 & 592.8\\ \hline
%           Regrasp in [$X, Z, \theta_Y$] & 15, -13, 45 & 0.67 & 2.54 & 17684\\ \hline
%           T-shaped & 25, 17.5, 0 & 0.54 & 0.82 & 32657\\ \hline
%     \end{tabular}
% \end{table*}
% \end{comment}

\begin{table}
\caption{Planning times (sec.) for approaches using motion cone, stable check~\cite{ChavanDafle2018a} and MNCP~\cite{ChavanDafle2017} for unit-step propagation}
  \label{tab:timing}
	\centering
	\begin{tabular}{|l|c|r|r|r|}
         \hline
          \textbf{Manipulation} & \textbf{Goal} & \textbf{Planning}  & \textbf{Planning}   & \textbf{Planning}\\
           & [$X, Z, \theta_Y $] & \textbf{Time} & \textbf{Time}~\cite{ChavanDafle2018a} & \textbf{Time}~\cite{ChavanDafle2017} \\
          \hline
          Horz. offset (low $\mu$)  & 20, 0, 0 & 0.45 & 2.83 & 592.8\\ \hline
          [$X, Z, \theta_Y$] Regrasp & 15, -13, 45 & 0.67 & 2.54 & 17684\\ \hline
          T-shaped & 25, 17.5, 0 & 0.54 & 0.82 & 32657\\ \hline
    \end{tabular}
\end{table}
    
\section{Regrasp Examples and Experimental Results}
\label{sec:examples}
We evaluate the performance of our planner with examples of a parallel-jaw gripper manipulating a variety of objects. 
%
%To show the effectiveness of our motion cone based planner, we consider some of the regrasp tasks that are previously discussed for the equivalent algorithms in our prior work in~\citep{ChavanDafle2018a} and~\citep{ChavanDafle2017}. We could not find any other comparable and easy-to-test planning frameworks for benchmarking. 
%
The initial pose of an object in the gripper is treated as $[X, Z, \theta_Y]=[0,0,0]$. 
\tabref{tab:timing} lists the goal poses (in [mm, mm, deg.]) for different examples. 
While there are no comparable available algorithms that can solve the type of regrasps we are interested in, we provide comparisons with our own implementations of the same high-level planner paired with different algorithms to solve the mechanics of prehensile pushing. These include sampling with rejection by a feasibility check for stable pushing~\cite{ChavanDafle2018a}, and a complementarity
formulation (MNCP) that allows both sticking and slipping at the pusher contact~\cite{ChavanDafle2017}. We compare the performance in terms of planning time and the quality of the solutions.
The planning times in \tabref{tab:timing} are the median times over 10 trials.
%Planning times for algorithms used in \cite{ChavanDafle2018a} (named here as Stable Check) and \cite{ChavanDafle2017} (named here as MNCP) are shown in the same table for comparison.
%
All the computations are done in MATLAB R2017a on a computer with Intel Core i7 2.8 GHz processor.

In all the examples below, we assume three line pushers on the object, one on each side faces of the object parallel to the $Z$ axis and one under the object  parallel to the $X$ axis. We use high friction line pushers, except in the first example.

\begin{figure}
\centering
\includegraphics[scale=.14]{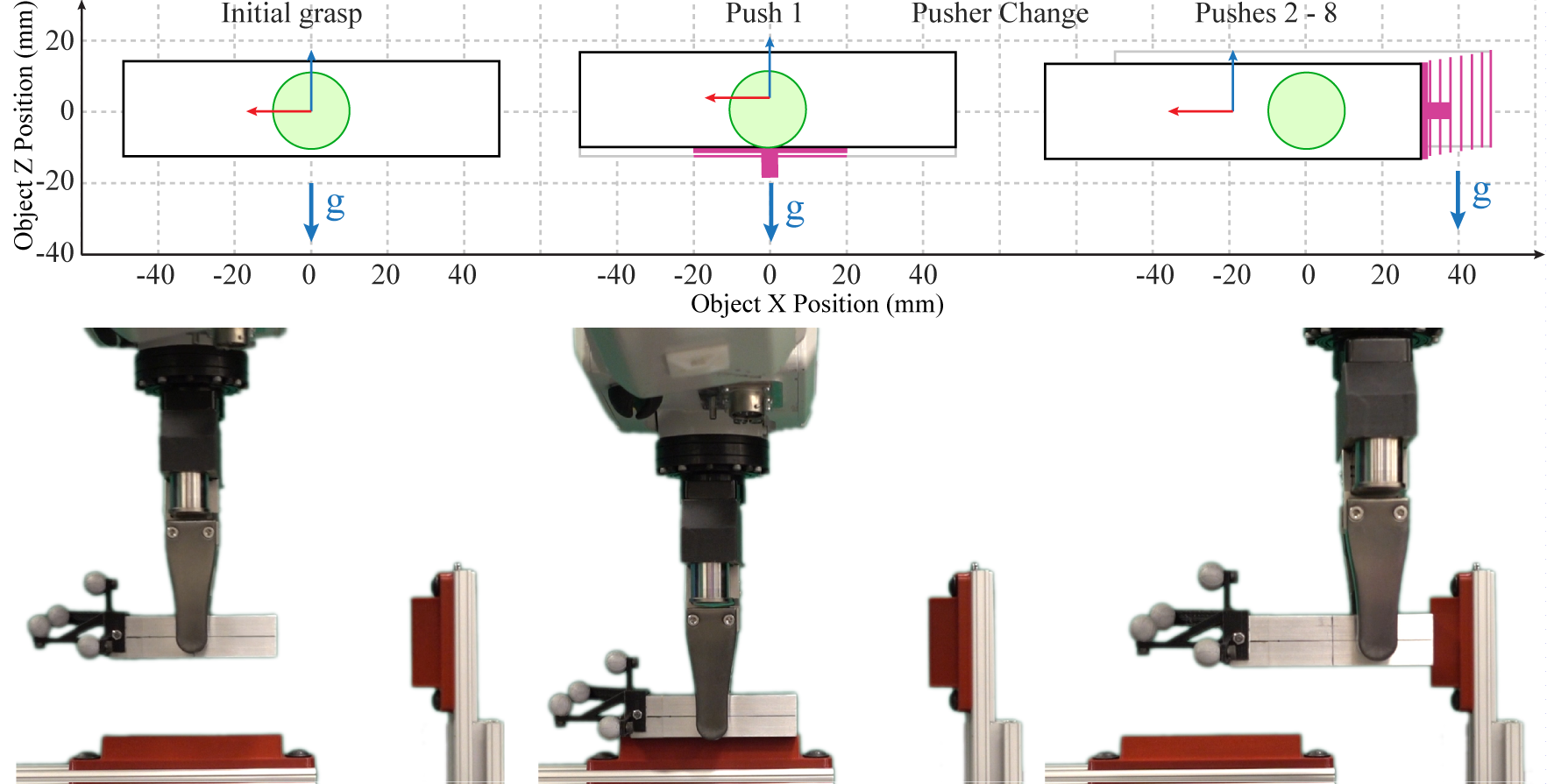}
\caption{Simulation and experimental run for a pushing strategy to regrasp the aluminum object with low friction pushers. In the simulation figure (top), the finger and pusher contacts are shown in green and magenta color respectively.}
\label{fig:linpush_lowfric}
\end{figure} 
\begin{figure}
\centering
\includegraphics[scale=0.155]{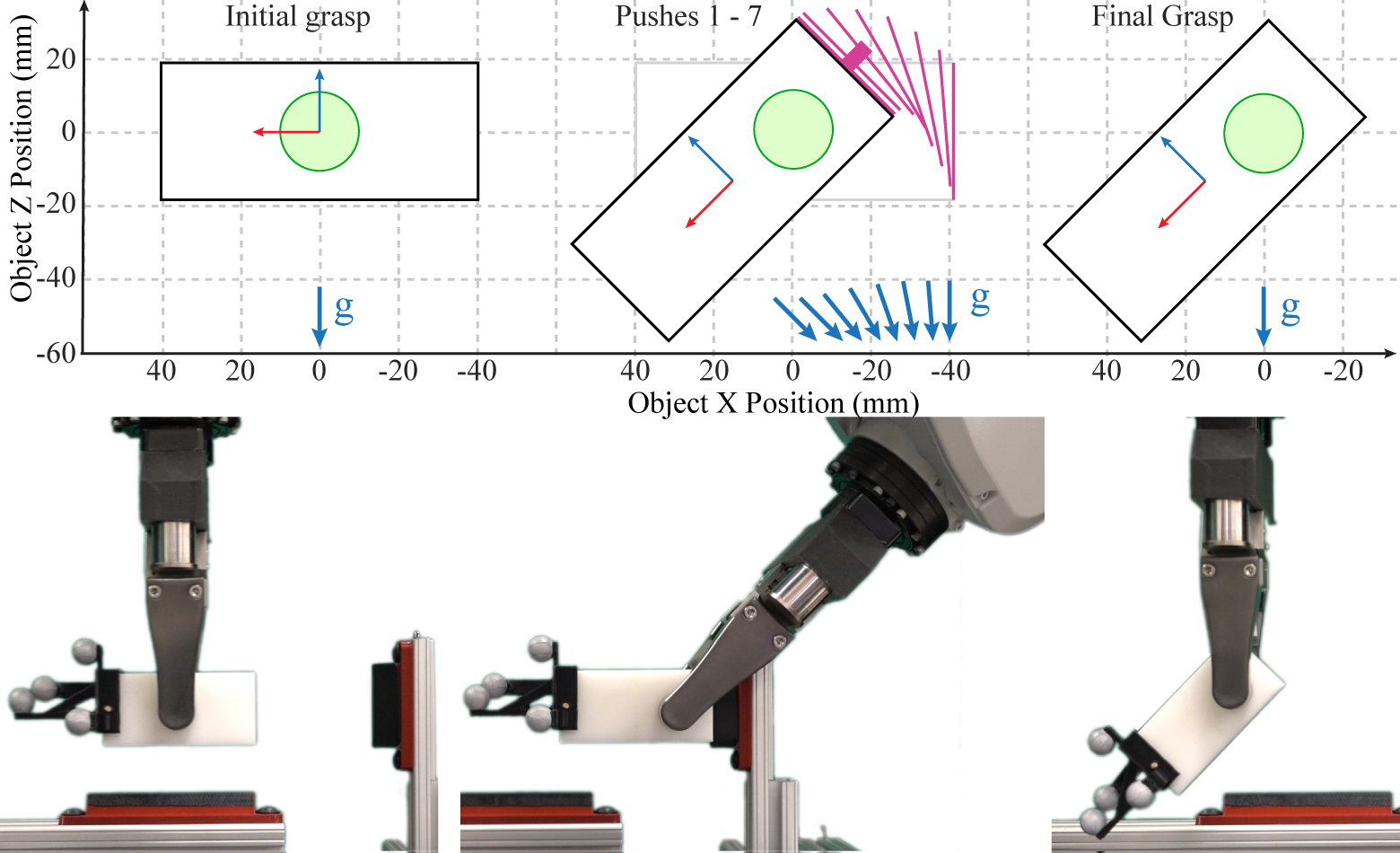}
\caption{A pushing strategy for [$X, Z, \theta_Y$] regrasp. In simulation, the direction of gravity remains constant in the pusher frame, because in reality, the pushers are fixed features.}
\label{fig:prepush}
\vspace{-3mm}
\end{figure}
\subsubsection{Regrasping an object offset to the center}
In this example, the goal is to regrasp the square prism horizontally 20 mm offset from the center. We use low friction pushers first. \citet{kolbert16} showed that for a similar setting, if the object is pushed horizontally in the grasp it slides down as it moves sideways in the grasp.  
For the low-friction pushers, our planner generates a strategy where the object is first pushed up using the bottom pusher and then the side pusher is used to virtually keep the object stationary while the fingers slide up and along the length of the object as seen in~\figref{fig:linpush_lowfric}. This plan is similar to the one found in~\citep{ChavanDafle2017, ChavanDafle2018a}.

When we replace the pushers with high-friction pushers (pushers with rubber coating), the planner detects that the desired object twist lies inside the motion cone for the side pusher at the initial grasp pose, i.e, simply pushing from the side is a valid pushing strategy.

\subsubsection{Regrasp in [$X, Z, \theta_Y$]}
The goal in this example is to regrasp the rectangular prism requiring twist in all three dimensions [$X, Z, \theta_Y$]. Similar to \citep{ChavanDafle2018a}, our planner finds a strategy to achieve the regrasp using only one pusher. In fact, as we can see in \figref{fig:prepush}, the pushing strategy our planner comes up with is more direct and seems to avoid unnecessary object motions seen in the strategy shown in \citep{ChavanDafle2018a}-Fig.1. 

% %
% \begin{figure}[t]
% \centering
% \includegraphics[scale=.8]{figures/linpush_highmu_small.pdf}
% \caption{Simulation and experimental run for a pushing strategy to regrasp the aluminum object with high friction pushers.}
% \label{fig:linpush_highfric}
% \end{figure}
% %

\subsubsection{Manipulating a non-convex object}
In this example, the goal is to regrasp a T-shaped object. The goal pose is such that a greedy approach to push the object directly towards the goal will result in losing the grasp on the object.
\begin{figure}
\centering
\includegraphics[scale=0.88]{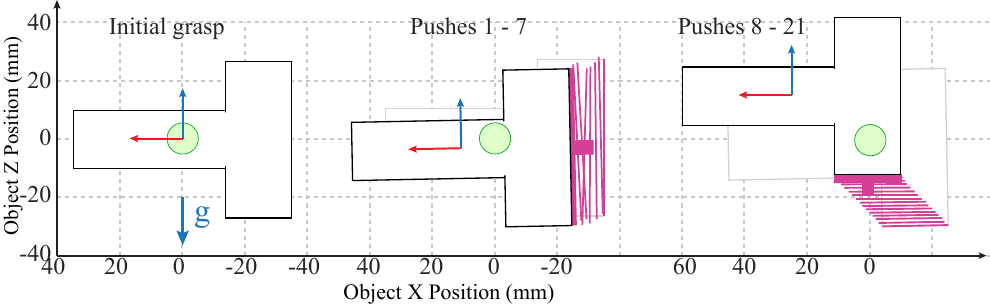}
\caption{Simulated motion of the object in the grasp for a pushing strategy to manipulate T-shaped object. Snapshots of the experimental run are is shown in \figref{fig:tpush_exp}}
\label{fig:tpush_sim}
\vspace{-3mm}
\end{figure}
Our planner comes up with  a pushing strategy that respects the geometric constraints of the problem as shown in  \figref{fig:tpush_sim}.

%During the experimental runs for all the manipulations above, we observed negligible slip at the pusher contacts.
%as we can see in the Fig. \ref{fig:linpush_lowfric}, \ref{fig:prepush}, and \ref{fig:tpush_exp} and the supplement video. 

\section{Discussion}
\label{sec:discussion}

The motion cone is the set of possible twists of a  pushed object. 
It also describes the set of motions of the pusher that yield sticking behavior.
It abstracts away the complex dynamics of frictional pushing and provides direct bounds in the action or effect space for pushing tasks. 

%\citet{mason86} introduced the concept of the motion cone for the case of pushing an object on a horizontal surface.
%
In this paper we extend the concept of motion cones to a general set of planar pushing tasks with external forces such as the gravitational force in the plane of motion. 
We show that the motion cone for a general planar push is defined as a cone with low-curvature faces, and propose a polyhedral approximation for efficient computation. 

We demonstrate the use of motion cones as the propagation step in a sampling-based planner for in-hand manipulation. Combining a T-RRT$^*$-based high level planning framework and a \emph{motion cone}-based dynamics propagation, the planner builds in-hand manipulation strategies with sequences of continuous prehensile pushes in a fraction of a second. 

The motion cone provides a direct knowledge of the set of reachable configurations. Such a structure of reachable volumes could enable planning through regions/volumes of configuration space~\cite{brock2001decomposition,morales2007analysis,Russ09,Russ11}. Moreover, motion cones, as bounds on pusher actions or as bounds on the object motions, have an adequate form to be incorporated into trajectory optimization frameworks to plan pushing strategies. We believe that the extension and application of motion cones to more general settings provides new opportunities for fast and robust manipulation through contact.

\section*{Acknowledgments}
This material is based upon the work supported by Mathworks, the Chang Award and the MIT Merrill Lynch Fellowship. We would like to thank the members of the Mcube Lab, particularly Francois Hogan, Nima Fazeli, and Maria Bauza for helpful discussion and advice. Special thanks to Prof. Matt Mason for suggestions on the graphical illustrations of the motion cone in \figref{fig:wrenchcone_constr} and \figref{fig:motioncone_constr}. 
%\arnote{Add NRI Funding}.
% \nocite{ChavanDafle2015a}

%% Use plainnat to work nicely with natbib. 

\bibliographystyle{plainnat}
\bibliography{ncd-rss18}

\end{document}